\pgfplotsset{compat=1.5} 
\tikzset{>=latex}
\newtheorem{theorem}{Theorem}
\newtheorem{lemma}{Lemma}
\newtheorem{proposition}{Proposition}
\newtheorem{corollary}{Corollary}
\newtheorem{fact}{Fact}
\newcommand{\remove}[1]{}
\newenvironment{remark}[1][Remark]{\begin{trivlist}
\item[\hskip \labelsep {\bfseries #1}]}{\end{trivlist}}
\newenvironment{definition}[1][Definition.]{\begin{trivlist}
\item[\hskip \labelsep {\bfseries #1}]}{\end{trivlist}}
\newcommand{\Opt}{\mathrm{Opt}}
\newcommand{\real}{\mathbb{R}}
\newcommand{\ints}{\mathbb{Z}}
\newcommand{\eqdef}{~\stackrel{\mbox{\tiny \textnormal{def}}}{=}~}
\newcommand{\set}[1]{\left\{#1\right\}}
\newcommand{\tup}[1]{\left\langle #1 \right\rangle}
\newcommand{\ceil}[1]{\left\lceil #1 \right\rceil}
\newcommand{\floor}[1]{\left\lfloor #1 \right\rfloor}
\newcommand{\tpath}{\textsc{Path}}
\newcommand{\tree}{\textsc{SubTree}}
\newcommand{\fix}[1]{\textcolor{black}{#1}}
\newcommand{\red}[1]{\textcolor{red}{#1}}
\newcommand{\INR}{\mathbf{INR}} 
\newcommand{\EA}{\mathbf{EA}} 
\newcommand{\WI}{\mathbf{WIRE}} 
\newcommand{\Acc}{\mathbf{Acc}} 
\newcommand{\ARN}{\mathbf{ARN}} 
\newcommand{\ARS}{\mathbf{ARS}} 
\newcommand{\IDA}{\mathrm{IDA}^*} 
\newcommand{\Astar}{\mathrm{A}^*} 
\title{Inconsistency and Accuracy of Heuristics with A* Search}
\author{
Hang Dinh\\
Indiana University South Bend\\
htdinh@iusb.edu
\And
Hieu Dinh\\
MathWorks\\
hieu.dinh@mathworks.com
}
\date{\today}               
\begin{document}
\nocopyright
\maketitle              
\thispagestyle{empty}

\begin{abstract}
Many studies in heuristic search suggest that the accuracy of the heuristic used has a positive impact on improving the performance of the search. In another direction, historical research perceives that the performance of heuristic search algorithms, such as A* and IDA*, can be improved by requiring the heuristics to be consistent -- a property satisfied by any perfect heuristic.  
However, a few recent studies show that inconsistent heuristics can also be used to achieve a large improvement in these heuristic search algorithms. These results leave us a natural question: \emph{which property of heuristics, accuracy or consistency/inconsistency, should we focus on when building heuristics}? While there are studies on the heuristic accuracy with the assumption of consistency, no studies on both the inconsistency and the accuracy of heuristics are known to our knowledge.  

In this study, we investigate the relationship between the inconsistency and the accuracy of heuristics with A* search. Our analytical result reveals a correlation between these two properties. We then run experiments on the domain for the Knapsack problem with a family of practical heuristics. Our empirical results show that in many cases, the more accurate heuristics also have higher level of inconsistency and result in fewer node expansions by A*.  
\end{abstract}

\section{Introduction}
Heuristic search has been playing a practical role in solving hard problems. One of the most popular heuristic algorithms is $\Astar$ search \cite{Ref_Hart68formal}, which is essentially best-first search with an additive evaluation $f(x)=g(x)+h(x)$, where $g(x)$ is the cost of the current path from the start node to node $x$, and $h(x)$ is an estimation of the cheapest cost $h^*(x)$ from $x$ to a solution node. The function $h$ is called a \emph{heuristic function}, or \emph{heuristic} for short.
An important property of $\Astar$ search is its admissibility: $\Astar$ will always return an optimal solution if the heuristic $h$ it uses is \emph{admissible}, meaning $h(x)$ never exceeds $h^*(x)$. 

Research on $\Astar$  and other similar heuristic search algorithms, such as $\IDA$ \cite{Ref_Korf85Iterative}, has focused on understanding the impact of properties of the heuristic function on the quality of the search. 
A well-studied subclass of admissible heuristics is the one with the \emph{consistency} property. Heuristic $h$ is called \emph{consistent} if $h(x)\leq c^*(x,x')+h(x')$ for all pairs of nodes $(x,x')$, where $c^*(x,x')$ is the cheapest cost from $x$ to $x'$.  
Consistency was introduced in the original $\Astar$ paper \cite{Ref_Hart68formal} and later became a desirable property of admissible heuristics for two perceptions. First, since the perfect heuristic  $h^*$ is  consistent, it is expected that a good heuristic should also be consistent. The consistency is believed to enable $\Astar$ to forgo reopening nodes \cite[p. 82]{Ref_Pearl84heuristics} and thus can reduce the number of node expansions.  Second, inconsistent admissible heuristics seem rare. In fact, it is assumed by many researchers \cite{Ref_Richard00recent} that ``almost all admissible heuristics are consistent.''  

The portrait of inconsistent heuristics was usually painted negatively until recently, when \citeauthor{Ref_Zahavi07inconsistent}~\shortcite{Ref_Zahavi07inconsistent} discovered that inconsistency is actually not that bad. They  demonstrated by empirical results that in many cases, inconsistency can be used to achieve large performance improvements of $\IDA$. They then promoted the use of inconsistent  heuristics and showed how to turn a consistent heuristic into an inconsistent heuristic using the \emph{bidirectional pathmax} (BPMX) method of \citeauthor{Ref_Felner05Dual}~\shortcite{Ref_Felner05Dual}.  Follow-up studies \cite{Ref_Felner11Inconsistent,Ref_Zhang09search} have also provided positive results of inconsistent heuristics with $\Astar$ search and encouraged researchers to explore inconsistency as a means to further improve the performance of $\Astar$.

In another line of research on heuristics, there have been extensive investigations on the impact of the accuracy of the heuristic on the performance of $\Astar$ (and $\IDA$). While there are a few negative results \cite{Ref_Richard98complexity,Ref_Richard01depth,Ref_Helmert08how}, most studies \cite{Ref_Pohl77practical,Ref_Gaschnig79thesis,Ref_Huyn80probabilistic,Ref_Sen04average,Ref_Dinh07value,Ref_Dinh12Complexity} in this line support the intuition that in many search spaces, improving the accuracy of the heuristic  can improve the efficiency of $\Astar$. Some of the negative results \cite{Ref_Richard98complexity,Ref_Richard01depth} on the benefit of heuristic accuracy were actually obtained under the assumption that the heuristic is consistent. Other negative results only apply to specific planning domains \cite{Ref_Helmert08how} or contrived search spaces with an overwhelming number of solutions \cite{Ref_Dinh12Complexity}.   

In light of the newly discovered benefit of inconsistent heuristics and the well-established positive results on the accuracy of heuristics, it is natural to ask \emph{so which property, consistency/inconsistency or accuracy, of  heuristics really matter to the performance of $\Astar$?. Is there any relationship between these properties of heuristics?} The goal of paper is to address these questions. 


In this work, we first analyze a correlation between inconsistency and accuracy of heuristics. Our analytical result reveals that the level of inconsistency of a heuristic can serve as an upper bound on the level of accuracy of the heuristic (see Theorem \ref{Thm:WIRE-ARN} for details.) We then investigate the relationship between the inconsistency and accuracy of heuristics as well as their impact on the performance of $\Astar$, by running experiments on a practical domain for the Knapsack problem taken from \cite{Ref_Dinh12Complexity}. 

Our study differs from the previous works \cite{Ref_Felner11Inconsistent,Ref_Zhang09search} on inconsistent heuristics with $\Astar$ in both the search space used and the construction of heuristics. 
While \citeauthor{Ref_Felner11Inconsistent} and \citeauthor{Ref_Zhang09search}  use undirected graphs and focus on the reduction in node re-expansions as a benefit of inconsistency, our experiments are done on a directed acyclic graph on which $\Astar$ will never reopen nodes, regardless of the heuristic used. For this search graph, we use a family of heuristics that arise in practice, which allow us to compare the inconsistency level and the accuracy level of many heuristics within this family. Recall that \citeauthor{Ref_Felner11Inconsistent} and \citeauthor{Ref_Zhang09search} incorporated BPMX into $\Astar$ and compared the performance of $\Astar$ with other less well-known heuristic algorithms (B, B', C). However, as pointed out by \citeauthor{Ref_Zahavi07inconsistent}~\shortcite{Ref_Zahavi07inconsistent}, BPMX is only applicable for undirected graphs, thus is inapplicable for the search space we consider.



\section{Preliminaries}\label{Sec_background}
Firstly, we would like to review basic background on $\Astar$ search and introduce our notation.

A typical search problem for $\Astar$ is defined by an edge-weighted search graph $G$ with a start node and a set of goal nodes called solutions. For each graph $G$, we will use $V(G)$ and $E(G)$ to denote the set of vertices and the set of edges of $G$.
We will denote a general search space for $\Astar$ as $(G, c, x_0, S)$, where $G$ is a directed graph, $c: E(G)\to \real^+$ is a function assigning a positive cost to each edge, $x_0\in V(G)$ is the start node, and $S\subset V(G)$ is the set of solution nodes. When $x_0$ and $S$ are not important in the current context, we may only write $(G,c)$. Given a search space $(G,c)$, for each node $x\in V(G)$, 
let $h^*(x)$ denote the cost of a cheapest path from $x$ to a solution node.    

A heuristic function on a search space $(G,c)$ is a function $h: V(G)\to \real^+$, where $h(v)$ is an estimation of  $h^*(x)$, for each $x\in V(G)$.  Since $h^*(s)=0$ for every solution node $s$, we will  assume that a heuristic function must have value zero at every solution node. We will write $\Astar(h)$ to refer to the $\Astar$ search using heuristic $h$. Recall that $\Astar(h)$ is a specialized best-first search algorithm with the evaluation function $f(x)=g(x)+h(x)$, where $g(x)$ is the cost of the \emph{current} path from the start node to node $x$. Details of  the $\Astar(h)$ search on search space $(G,c,x_0, S)$ are described in Algorithm \ref{Alg:Astar_on_graph}. The efficiency of $\Astar$ is usually measured by the number of node expansions, i.e., the executions of Step \ref{Graph_Step_expand} in Algorithm \ref{Alg:Astar_on_graph}.

\begin{algorithm}\caption{$\Astar$ search on search space $(G,c,x_0, S)$ using heuristic $h$ \cite[p. 64]{Ref_Pearl84heuristics}}\label{Alg:Astar_on_graph}
\begin{enumerate}
\item Initialize $\textsc{Open} := \{x_0\}$ and $g(x_0):=0$. 
\item Repeat until \textsc{Open} is empty.
\begin{enumerate}
\item\label{Graph_Step_remove} Remove from $\textsc{Open}$ and place on $\textsc{Closed}$ a node $x$ for which {the function $f=g+h$} is minimum.
\item If $x$ is a solution, i.e., $x\in S$, exit with success and return $x$.
\item\label{Graph_Step_expand} Otherwise, expand $x$, {generating} all its successors.
For each successor $x'$ of $x$,
\begin{enumerate}
 \item If $x'$ is not on $\textsc{Open}$ or $\textsc{Closed}$, estimate $h(x')$ and 
 calculate $f(x')= g(x')+h(x')$ where $g(x')=g(x)+c(x,x')$, and put $x'$ to $\textsc{Open}$  with pointer back to $x$.
 \item\label{Graph_Step_expand2} If $x'$ is on $\textsc{Open}$ or $\textsc{Closed}$, compare $g(x')$ and $g(x)+c(x,x')$.
 If $g(x')> g(x)+c(x,x')$, 
 direct the pointer of $x'$ back to $x$ and reopen $x'$ if it is in $\textsc{Closed}$.
\end{enumerate}
\end{enumerate}
\item Exit with failure.
\end{enumerate}
\end{algorithm}

\paragraph{Informedness and dominance.}
Admissible heuristics also possess a natural \emph{dominance} property \cite[Thm. 7, p. 81]{Ref_Pearl84heuristics}:
for any admissible heuristic functions $h_1$ and $h_2$ on $\mathcal{T}$, if $h_1$ is \emph{more informed than} $h_2$, i.e., $h_1(x) > h_2(x)$ for all non-solution node $x$,  then $\Astar(h_1)$ \emph{dominates} $\Astar(h_2)$, i.e., every node expanded by $\Astar(h_1)$ is also expanded by $\Astar(h_2)$.

\paragraph{Consistency and monotonicity.} The consistency is in fact equivalent to the \emph{monotonicity} \cite[Thm. 8, p. 83]{Ref_Pearl84heuristics}. Precisely, heuristic $h$ on a search space $(G,c)$ is  \emph{consistent} if and only if $$h(x)\leq c(x,x')+h(x')$$ for all edges $(x,x')\in E(G)$. 
\remove{
\begin{definition}[Effective Branching Factor (EBF).] The effectiveness of a heuristic is typically measured  by the \emph{effective branching factor}, precisely defined as follows \cite[p. 102]{Ref_Russell95artificial}. If $\Astar$ makes $N$ node expansions and finds a solution at depth $d$, then the effective branching factor $b^*$ is the branching factor of a uniform tree of depth $d$ and $N$ nodes. Thus,
\[
N = 1+b^*+\ldots + (b^*)^d\,.
\]
This effective branching factor is approximately $\sqrt[d]{N}$, since $b^*\leq \sqrt[d]{N}$ and if $b^*\geq 2$, we also have $b^* \geq \sqrt[d]{N}/\sqrt[d]{2}$. Hence, we will use $\sqrt[d]{N}$ as a proxy for the effective branching factor. 
\end{definition}
}
\remove{
\begin{definition}
Let $h$ be a heuristic with which $\Astar$ makes $N$ node expansions and finds a solution at depth $d$. We define the \emph{approximate effective branching factor} of $h$ to be the quantity $\sqrt[d]{N}$.
\end{definition}
}
\remove{
\paragraph{Node expansion condition.} It is known \citep[Lemma 2]{Ref_Dechter85generalized} that at any time before $\Astar$ terminates, there is always a vertex $v$ present in
\textsc{Open} such that $v$ lies on a solution path and $f(v)\leq M$, where \fix{$M$ is the min-max value defined as follows:}
\begin{equation}\label{Eq:minmax}
M \red{\eqdef} \min_{s\in S}\Bigl( \max_{u\in \tpath(s)} f(u)\Bigr)\,.
\end{equation}
This fact leads to the following node expansion conditions:
\begin{itemize}
 \item Any vertex $v$ expanded by $A^*(h)$ must have $f(v)\leq M$ \citep[Thm. 3]{Ref_Dechter85generalized}. We say that a vertex $v$ satisfying $f(v)\leq M$ is \emph{potentially expanded} by $\Astar$.
 \item Any vertex $v$ with
 $$
 \max\limits_{u\in\tpath(v)}f(u) < M
 $$
 must be expanded by $A^*(h)$
\citep[Thm. 5]{Ref_Dechter85generalized}. In particular, when the function $f$
monotonically increases along the path from the root $r$ to $v$, the node $v$ must be expanded if $f(v)<M$.
\end{itemize}
}

\remove{ 
A typical search problem is defined by a search graph with a starting node and a set of goal nodes called solutions.
Any instance of $\Astar$ search on a graph, however, can be simulated by $\Astar$ search on a cover tree without reducing running time; this is discussed in Section~\ref{Subsec:unroll}. Since the number of expansions on the cover tree of a graph is larger than or equal to that on the original graph, it is sufficient to upper bound the running time of $\Astar$ search on the cover tree. 
With this justification, we begin with considering the $\Astar$ algorithm for search problems on a rooted tree.

\begin{remark}[Problem definition and notations.] {Let $\mathcal{T}$ be a tree representing an infinite search space, and
let $r$ denote the root of $\mathcal{T}$}. For convenience, we also \red{use} the symbol $\mathcal{T}$ \red{to} denote the set of
vertices in the tree $\mathcal{T}$. Solutions are specified by a nonempty subset $S \subset \mathcal{T}$ \fix{of nodes in $T$}.
{Each edge on $\mathcal{T}$ is assigned a positive number called the \emph{edge cost}.}
For each vertex $v$ in $\mathcal{T}$, let
\begin{itemize}
 \item $\tree(v)$ denote the subtree of $\mathcal{T}$ rooted at $v$,
 \item $\tpath(v)$ denote the  path in $\mathcal{T}$ from root $r$ to $v$,
 \item {$g(v)$ denote the total (edge) cost of $\tpath(v)$,}
 \item {$h^*(v)$ denote the cost of the cheapest path from $v$ to a solution in $\tree(v)$.} (We write $h^*(v) = \infty$ if no such solution exists.)
\end{itemize}
The objective value, or \emph{optimal value}, of this search problem is $h^*(r)$, {the cost of the cheapest path}
from the root $r$ to a solution.
{The cost of a solution $s\in S$ is the value of $g(s)$.}
A solution {of cost equal to} $h^*(r)$ is referred to as \emph{optimal}.
\end{remark}

The $\Astar$ algorithm is a best-first search employing an additive evaluation function
$f(v)=g(v)+h(v)$, where $h$ is a function on $\mathcal{T}$ that heuristically estimates
the actual {cost} $h^*$. Given a heuristic function $h:\mathcal{T}\to[0,\infty]$, the $\Astar$ algorithm
using $h$ for our defined search problem \fix{on the tree $\mathcal{T}$} is described as follows:

\begin{algorithm}\caption{$\Astar$ search on a tree}
\begin{enumerate}
 \item Initialize $\textsc{Open}=\set{r}$.
 \item\label{Astar_Step2} \red{If $\textsc{Open}$ is empty, exit with failure.}
 \item Remove from $\textsc{Open}$ a node $v$ at which the function $f=g+h$ is minimum.
 \item If $v$ is a solution, exit with success and return $v$.
 \item Otherwise, expand node $v$, adding all its children in $\mathcal{T}$ to \textsc{Open}. 	\item Repeat step~\ref{Astar_Step2}.
\end{enumerate}
\end{algorithm}

It is known (Lemma 2 in \citep{Ref_Dechter85generalized}) that at
any time before $\Astar$ terminates, there is always a vertex $v$ present in
\textsc{Open} such that $v$ lies on a solution path and $f(v)\leq M$, where \fix{$M$ is the min-max value defined as follows:}
\begin{equation}\label{Eq:minmax}
M \red{\eqdef} \min_{s\in S}\Bigl( \max_{u\in \tpath(s)} f(u)\Bigr)\,.
\end{equation}
This fact leads to the following node expansion conditions:
\begin{itemize}
 \item Any vertex $v$ expanded by $\Astar$ (with heuristic $h$) must have $f(v)\leq M$. (cf. Theorem 3 in \citep{Ref_Dechter85generalized}). We say that a vertex $v$ satisfying $f(v)\leq M$ is \emph{potentially expanded} by $\Astar$.
 \item Any vertex $v$ with
 $$
 \max\limits_{u\in\tpath(v)}f(u) < M
 $$
 must be expanded by $\Astar$ (with heuristic $h$)
(cf. Theorem 5 in \citep{Ref_Dechter85generalized}). In particular, when the function $f$
monotonically increases along the path from the root $r$ to $v$, the node $v$ must be expanded if $f(v)<M$.
\end{itemize}
The value of $M$ will be obtained on the solution path with which $\Astar$ search terminates
(Lemma 3 in \citep{Ref_Dechter85generalized}), which implies that
$M$ is an upper bound for the {cost} of the solution found by the $\Astar$ search.

We remark that if $h$ is a reasonable approximation to $h^*$ along the path to the optimal solution, this immediately provides some control on $M$. In particular:
\begin{proposition}\label{Prop_M_bound} \red{(See also \cite{Ref_Davis88advantages})}
Suppose that for some {$\alpha \geq 1$}, $h(v)\leq \alpha h^*(v)$ for all vertices $v$ lying on an optimal solution path;
then $M\leq \alpha h^*(r)$.
\end{proposition}
\begin{proof}
Let $s$ be an optimal solution.
For all $v\in\tpath(s)$,
\[
f(v) \leq g(v)+\alpha h^*(v) = g(v) +\alpha (g(s)-g(v)) \leq \alpha g(s)\,.
\]
Hence \(M \leq \max\limits_{v\in \tpath(s)} f(v) \leq \alpha g(s) = \alpha h^*(r)\).
\end{proof}
In particular, $M=h^*(r)$ if the heuristic function satisfies $h(v)\leq h^*(v)$ for all $v\in \mathcal{T}$, \red{ in which case 
the heuristic function is called \emph{admissible}}.
The observation above recovers the fact that $\Astar$ always finds an optimal solution
when coupled with an admissible heuristic function (cf. Theorem 2 in \S 3.1 of \citep{Ref_Pearl84heuristics}).
Admissible heuristics also possess a natural \emph{dominance} property:
\red{for any admissible heuristic functions $h_1$ and $h_2$ on $\mathcal{T}$, if $h_1(v) < h_2(v)$ for all $v\in \mathcal{T}\red{\setminus}S$ then $\Astar$ using $h_1$ expands more nodes than $\Astar$ using $h_2$} \citep[pg. 81]{Ref_Pearl84heuristics}.

In particular, a node $v$ must be expanded by breath-first search (BFS) if $g(v)< h^*(r)$.
}

\section{Inconsistency and Accuracy}

We now analyze the relationship between  inconsistency and accuracy of heuristics. 
We begin with introducing metrics characterizing the inconsistency of a heuristic.

To characterize the level of inconsistency of a heuristic $h$, \citeauthor{Ref_Zahavi07inconsistent}~\shortcite{Ref_Zahavi07inconsistent} defined the following two terms:
\begin{itemize}
\item \emph{Inconsistency rate of an edge} (IRE): For each edge $e=(u,v)$, let $IRE(h,e)=|h(u)-h(v)|$. The IRE of $h$ is the average $IRE(h,e)$ over all edges $e$ of the search space. 
\item \emph{Inconsistency rate of a node} (IRN): For each node $v$, let $IRN(h,v)$ be the maximal value of $|h(u)-h(v)|$ for any node $u$ adjacent to $v$. The IRN of $h$ is the average $IRN(h,v)$ over all nodes $v$ of the search space.
\end{itemize}

Note that neither IRN nor IRE  defined above takes into account the edge costs. 
If the search space has uniform edge cost, we can say that a consistent heuristic has IRN or IRE at most 1. But if the search space has nonuniform edge costs, we are unable to determine if a heuristic is consistent by just looking up its IRN or IRE.  
Additionally, the metrics IRN and IRE of \citeauthor{Ref_Zahavi07inconsistent}~\shortcite{Ref_Zahavi07inconsistent} were defined for undirected graphs, which are not suitable for the case of search graphs considered in this paper.  Therefore, we define other metrics for the inconsistency to overcome these shortcomings. 
\begin{definition}
Let $h$ be a heuristic on a search space $(G,c)$. 
The \emph{weighted inconsistency rate} of $h$ at edge $e=(x,x')$ is 
\[
\WI(h,e) \eqdef \frac{h(x)-h(x')}{c(x,x')}\,.
\] 
The \emph{weighted inconsistency rate} of $h$, denoted $\WI(h)$, is the average of $\WI(h,e)$ over all edges $e=(x,x')\in E(G)$ where $x$ is a non-solution node.
\end{definition}

The notion of $\WI$ can be seen as a weighted analog of IRE with two minor caveats. First, we use  $(h(x)-h(x'))$ instead of the absolute value $|h(x)-h(x')|$, since the graphs we consider are directed. Second, when computing $\WI(h)$, we do not count $\WI(h,e)$ for edges $e$ from a solution node, because there will be no node expansion made from a solution node. More precisely, if $e=(x,x')$ and $x$ is a solution, then $\WI(h,e)$ has no impact on the search quality.

Clearly, if $h$ is consistent, then  $\WI(h)\leq 1$. The converse, however, is not necessarily true. 
Thus, we define the following metric that can be used to determine if a heuristic is consistent or inconsistent. 

\begin{definition} Let $h$ be a heuristic on a search  space $(G,c)$.
We say that $h$ is \emph{inconsistent at node $x$} if $h(x)> c(x,x')+h(x')$ for some direct successor $x'$ of $x$, i.e., $(x,x')\in E(G)$. The \emph{inconsistent node rate} of $h$, denoted $\INR(h)$, is the ratio of the number of non-solution nodes at which $h$ is inconsistent over the number of all non-solution nodes. 
\end{definition}

In other words, $\INR(h)$ is the probability that $h$ is inconsistent at a random non-solution node. Intuitively, the larger $\INR(h)$, the more inconsistent the heuristic $h$ is. Note that since the heuristic value of any solution node is zero, a heuristic is never inconsistent at a solution node. Hence, we have the following fact:
\begin{fact}
Let $h$ be any heuristic. Then $h$ is inconsistent if and only if $\INR(h)~>~0$. 
\end{fact}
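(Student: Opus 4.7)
The plan is to prove the two directions of the biconditional separately, both by unpacking the definitions carefully. The key observation that makes the argument work is the convention stated in the Preliminaries that $h(s)=0$ at every solution node $s$, together with $c(x,x')>0$ and $h\geq 0$ everywhere.

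For the easier direction, I would assume $\INR(h)>0$. By definition of $\INR(h)$, the number of non-solution nodes at which $h$ is inconsistent is strictly positive, so there exists a non-solution node $x$ and a successor $x'$ with $(x,x')\in E(G)$ such that $h(x)>c(x,x')+h(x')$. This directly contradicts the consistency condition $h(x)\leq c(x,x')+h(x')$ stated in the Consistency-and-monotonicity paragraph, so $h$ is inconsistent.

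For the forward direction, assume $h$ is inconsistent. Then there is at least one edge $(x,x')\in E(G)$ with $h(x)>c(x,x')+h(x')$. The step that needs care is verifying that $x$ must be a non-solution node, so that this witness is actually counted by $\INR(h)$. If $x$ were a solution, then $h(x)=0$ by our convention, and combined with $c(x,x')>0$ and $h(x')\geq 0$ this would give $0=h(x)>c(x,x')+h(x')\geq c(x,x')>0$, a contradiction. Hence $x$ is a non-solution node at which $h$ is inconsistent in the sense of the preceding definition, so the numerator in $\INR(h)$ is at least one and $\INR(h)>0$.

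There is essentially no substantive obstacle here; the statement is really a sanity check that the restriction to non-solution nodes in the definition of $\INR$ does not throw away any inconsistency witnesses. The only pitfall to watch out for is to state the zero-at-solutions convention explicitly, since it is the sole reason a solution node cannot itself serve as a witness to inconsistency.
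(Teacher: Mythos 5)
Your proof is correct and matches the paper's own (very brief) justification: the paper simply notes that since $h$ vanishes at solution nodes and edge costs are positive, $h$ can never be inconsistent at a solution node, so the restriction to non-solution nodes in $\INR$ loses no witnesses. Your write-up makes the same observation explicit, using the edge-based (monotonicity) characterization of consistency exactly as the paper does.
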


For the accuracy metrics of heuristics, we will adopt the  accuracy notion that measures the distance between the heuristic value and the actual value by a multiplicative factor, which has also been adopted in many previous works \cite{Ref_Gaschnig79thesis,Ref_Huyn80probabilistic,Ref_Sen04average,Ref_Dinh07value,Ref_Dinh12Complexity}

\begin{definition}
Let $h$ be a heuristic function on a search space $(G,c, x_0, S)$. For any non-solution node $x$, we define the \emph{accuracy rate of $h$ at node $x$} to be 
\[
\ARN(h,x)\eqdef \frac{h(x)}{h^*(x)}\,.
\] 
The \emph{accuracy rate of $h$}, denoted $\ARN(h)$, is the average of $\ARN(h,x)$ for all non-solution nodes $x\in V(G)\setminus S$. 
The accuracy rate of $h$ at the start node $x_0$ will be denoted $\ARS(h)$. That is,
\[
\ARS(h) \eqdef \frac{h(x_0)}{h^*(x_0)}\,.
\]
\remove{
The \emph{accuracy rate} of $h$, denoted $\Acc(h)$, is the minimal accuracy rate of $h$ at any non-solution node. That is,
\[
\Acc(h) \eqdef \min_{v\in V(G)\setminus S} \frac{h(v)}{h^*(v)}\,.
\]
}
\end{definition}
This notion of accuracy rate is particularly meaningful for admissible heuristics.
Intuitively, if $h$ is admissible, then the larger $\ARN(h,x)$, the more accurate the heuristic is at node $x$. The accuracy rate is in fact related to the informedness of admissible heuristics: for any two admissible heuristics $h_1$ and $h_2$ on the same search space, $h_1$ is more informed than $h_2$ iff  $\ARN(h_1,x)> \ARN(h_2,x)$ for all non-solution node $x$. 

\remove{
\begin{definition}[Effective accuracy.]The \emph{effective accuracy} of $h$, denoted $\EA(h)$, is the minimal accuracy of $h$ at any non-solution node that is opened by the $\Astar$ using $h$. That is,
\[
\EA(h) \eqdef \min\set{\frac{h(x)}{h^*(x)}\mid \text{$x$ is a non-solution node opened by $A^*(h)$}}\,.
\]
\end{definition}
}

We will now prove a basic relationship between weighted inconsistency rate and accuracy rate. 
\begin{theorem}\label{Thm:WIRE-ARN}
Let $h$ be a heuristic on a search space $(G,c)$ and $x\in V(G)$. If $\WI(h,e)\leq \omega$ for all edges $e$ along a cheapest path from $x$ to a solution node, then $\ARN(h,x)\leq \omega$.
\end{theorem}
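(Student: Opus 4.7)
The plan is to exploit the telescoping structure that is latent in the definition of $\WI(h,e)$. Rewriting the hypothesis $\WI(h,e) \leq \omega$ at an edge $e=(u,u')$ gives $h(u)-h(u') \leq \omega \cdot c(u,u')$, so along a chain of edges the left-hand sides will cancel in pairs while the right-hand sides accumulate into a total edge cost.

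Concretely, I would fix a cheapest path $x = x_0, x_1, \ldots, x_k = s$ from $x$ to a solution node $s$, guaranteed to exist (and to realize $h^*(x)$) by the definition of $h^*$. By hypothesis, for each $i \in \{0, 1, \ldots, k-1\}$,
\[
h(x_i) - h(x_{i+1}) \leq \omega \cdot c(x_i, x_{i+1}).
\]
Summing these $k$ inequalities, the left-hand side telescopes to $h(x_0) - h(x_k) = h(x) - h(s)$, while the right-hand side becomes $\omega$ times the total cost of the path, which is precisely $\omega \cdot h^*(x)$.

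Now I would invoke the convention introduced in the Preliminaries that $h$ vanishes at every solution node, so $h(s) = 0$. This turns the telescoped inequality into $h(x) \leq \omega \cdot h^*(x)$. Assuming $x$ is a non-solution node so that $h^*(x) > 0$ (otherwise $\ARN(h,x)$ is not defined in the first place, and the claim is vacuous), dividing through yields $\ARN(h,x) = h(x)/h^*(x) \leq \omega$, as required.

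There is no real obstacle in this argument; the only subtlety worth flagging is that the hypothesis is required along \emph{some} cheapest path rather than across the whole graph, which is exactly why picking one specific optimal path at the outset and restricting attention to its edges is the right move. If the hypothesis were only over arbitrary paths, one would have to argue that the bound holds along an optimal one as well, but here the statement hands us that freedom directly.
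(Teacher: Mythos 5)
Your proof is correct and follows essentially the same route as the paper's: fix a cheapest path, apply the edge-wise bound $h(x_i)-h(x_{i+1}) \leq \omega\, c(x_i,x_{i+1})$, telescope, and use $h(s)=0$ at the solution node. The extra remark about $h^*(x)>0$ for non-solution nodes is a reasonable clarification but does not change the argument.
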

\begin{proof}
Let $(x_1,\ldots, x_{\ell})$ be a cheapest path from $x$ to a solution node, where $x_1=x$ and $x_{\ell}$ is a solution, and assume $\WI(h,e)\leq \omega$ for all edges along this path. Then
\[
h(x_i)-h(x_{i+1}) \leq \omega\cdot c(x_{i},x_{i+1}) \quad\forall i=1,\ldots, \ell-1\,.
\]
On the other hand, $h^*(x)=\sum_{i=1}^{\ell-1} c(x_{i},x_{i+1})$. It follows that
\[
\begin{split}
h(x_1)-h(x_{\ell}) 
&= \sum_{i=1}^{\ell-1} (h(x_i)-h(x_{i+1})) \\
&\leq \sum_{i=1}^{\ell-1} \omega\cdot c(x_{i},x_{i+1}) = \omega h^*(x)\,.
\end{split}
\]
Since $x_{\ell}$ is a solution, $h(x_{\ell})=0$ by assumption. Thus, we have $h(x)=h(x)-h(x_{\ell})\leq \omega h^*(x)$.
\end{proof}

\begin{corollary}
For any heuristic $h$, if $\WI(h,e)\leq \omega$ for all edges $e$ then $\ARN(h,x)\leq \omega $ for all nodes $x$. 
\end{corollary}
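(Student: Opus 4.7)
The plan is to observe that this corollary is an immediate specialization of Theorem~\ref{Thm:WIRE-ARN}: the global hypothesis that $\WI(h,e)\le \omega$ for every edge $e\in E(G)$ is strictly stronger than the local hypothesis needed in the theorem, which only requires the bound to hold along a single cheapest path from $x$ to a solution.

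More concretely, I would fix an arbitrary non-solution node $x\in V(G)$ (since $\ARN(h,x)$ is defined only for such $x$) and, assuming $x$ can reach a solution so that $h^*(x)$ is finite, pick any cheapest path $(x_1,\ldots,x_\ell)$ from $x_1=x$ to a solution $x_\ell$. Each edge $e$ on this path lies in $E(G)$, so by hypothesis it satisfies $\WI(h,e)\le \omega$. Theorem~\ref{Thm:WIRE-ARN} then yields $\ARN(h,x)\le \omega$ for this $x$, and since the argument applies to every non-solution node, the corollary follows.

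The main obstacle is essentially nonexistent here: the real work was done in the telescoping summation inside the proof of Theorem~\ref{Thm:WIRE-ARN}, and the corollary only amounts to noting that a uniform bound on all edges of $G$ implies the per-path bound for every starting node simultaneously. The only minor care point worth flagging in the write-up is the implicit assumption that $h^*(x)<\infty$ (so that a cheapest path from $x$ to a solution exists), which is exactly the regime in which $\ARN(h,x)$ is meaningful.
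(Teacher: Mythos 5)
Your proposal is correct and matches the paper's intent exactly: the corollary is stated without proof precisely because it is the immediate specialization of Theorem~\ref{Thm:WIRE-ARN} you describe, applying the global edge bound to a cheapest path from each node. Your remark about implicitly assuming $h^*(x)<\infty$ is a reasonable care point but does not change the argument.
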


This means that an upper bound on the weighted inconsistency rates of a heuristic $h$ is also an upper bound on the accuracy rates of $h$. In particular, if the heuristic $h$ is consistent, then the less $\WI(h)$, the less accurate $h$ can be. 
This suggests that imposing consistency on the heuristic can prevent improving the heuristic accuracy. 

\remove{  
In the remaining part of this section, we will show impacts of both the weighted inconsistency rate and the accuracy rate at the starting node of the heuristic on the efficiency of $\Astar$. For this part, we would like to introduce further notation.
We will use $P(\xi)$ to denote the number of paths from the starting node with total cost less than $\xi$, and use $\overline{P}(\xi)$ to denote the number of paths from the starting node with total cost less than or equal to $\xi$. Clearly, the larger $\xi$, the larger $P(\xi)$ and $\overline{P}(\xi)$. 

\begin{theorem}\label{Lemma_UpperBound_WorstCase}
Let $h$ be an admissible heuristic over a search space $(G,c)$ whose optimal solutions have cost $k$. Suppose $\ARS(h)\geq \alpha$ and $\WI(h,e)\leq \omega$ for all edges $e$.
Then, 
\begin{enumerate}
\item The number of node expansions by $A^*(h)$ is at least $P\left(\frac{1-\alpha}{1+\omega}k\right)$.
\item If further $\omega <1$, the number of node expandsions by $A^*(h)$ is at most $\overline{P}\left(\frac{1-\alpha}{1-\omega}k\right)$.
\end{enumerate}
\end{theorem}
\begin{proof}
Let $x_0$ denote the starting node. Then $k=h^*(x_0)$ and $h(x_0) \geq \alpha h^*(x_0)= \alpha k$.
Consider an arbitrary opened node $v$ and a current path $(x_0, x_1,\ldots, x_{\ell})$ from $x_0$ to $v$, where $x_{\ell}=v$.

\remove{
First, observe that any node expanded by $\Astar$ must be opened by $\Astar$ before it is expanded. 
Consider a node $v$ that is opened by $A^*(h)$. 
Then
\[
h(v) \geq \alpha h^*(v)\,.
\]
Recall that $v$ is potentially expanded by $\Astar$ if $g(v)+h(v)\leq k$. If the node $v$ is expanded, it must have 
\[
g(v)+ \alpha h^*(v) \leq k\,.
\]
Note that the parent of any opened node must be expanded. Let $u$ be the parent of $v$, and let $c(u,v)$ denote the cost of the edge $(u,v)$. We have
\[
g(u) = g(v) - c(u,v)\,,~\text{and}
\]
\[
h^*(u)\leq h^*(v) + c(u,v)\,.
\]
Since $u$ is expanded we have $f(u)=g(u)+h(u)\leq k$.

Since every node along the shortest path from starting node $x_0$  to $v$ must be opened, we have $|h(x_0)-h(v)|\leq \omega g(v)$. 
}

Since $\WI(h,e)\leq \omega$ for all edges $e$, we have $|h(x_{i-1})-h(x_i)|\leq \omega c(x_{i-1},x_i)$ for all $i=1,\ldots, \ell$. Hence, 
\[
|h(x_0)-h(x_j)|\leq \omega\left(\sum_{i=1}^j c(x_{i-1},x_i)\right) = \omega g(x_j)\quad\forall j=1,\ldots,\ell\,.
\]

Since $h(v) \leq h(x_0)+\omega g(v) $, we have $f(v)\leq h(x_0)+(1+\omega)g(v)$. 
For any node $u\in \tpath(v)$, we also have $f(u)\leq h(x_0)+(1+\omega)g(u) \leq h(x_0)+(1+\omega)g(v)$. Thus, $\max_{u\in \tpath(v)} f(u)\leq h(x_0)+(1+\omega)g(v)$.
So by the node expansion condition, if $h(x_0)+ (1+\omega)g(v) < k$, or equivalently $g(v)< \frac{k-h(x_0)}{1+\omega}\leq \frac{1-\alpha}{1+\omega} k$, then $v$ must be  expanded. This shows that the number of nodes expanded is at least $N(\frac{1-\alpha}{1+\omega} k)$.


We also have $h(x_0)- \omega g(v)\leq h(v)$. Thus, if $v$ is expanded, then $h(x_0)+ (1-\omega) g(v) \leq f(v)\leq k$, which implies 
\[
g(v)\leq \frac{k-h(x_0)}{1-\omega} \leq \frac{1-\alpha}{1-\omega}k\,,
\]
if $\omega <1$.

\end{proof}

This suggests that for all heuristics with the same accuracy at the starting node, the smaller weighted inconsistency the more nodes will be expanded. In other words, the more inconsistent, the better the heuristic is. 

On the other hand, for $\omega <1$, the smaller $\omega$, the smaller the upper bound of $\overline{N}\left(\frac{1-\alpha}{1-\omega}k\right)$. This suggests that for consistent heuristics, the more consistent, the better.

\begin{corollary}
Let $h$ be an admissible heuristic over a uniform search space with branching factor $b$. Suppose the accuracy of $h$ at the starting node is at least $\alpha$, and $\WI(h)\leq \omega$.
Then, 
\begin{enumerate}
\item The effective branching factor of $h$ is at least $b^{\frac{1-\alpha}{1+\omega}}$.
\item If further $\omega <1$, the effective branching factor of $h$ is at most $b^{\frac{1-\alpha}{1-\omega}}$.
\end{enumerate}
\end{corollary}
}
\remove{ 
\section{An upper bound}
\begin{lemma}\label{Lemma_UpperBound_WorstCase}
Let $S$ be a solution set whose optimal solutions have cost $k$, i.e., $k=h^*(r)$. Let $h$ be a heuristic satisfying
\[
(1-\epsilon_1)h^*(v) \leq h(x) \leq (1+\epsilon_2)h^*(v)
\]
for any node $v$ that has been opened by $\Astar$ using $h$.
Then, the number of nodes expanded by the $\Astar$ using $h$ is no more than
\[
2d^{(\epsilon_1+\epsilon_2)k}+(1-\epsilon_1)k N_{\epsilon_1+\epsilon_2}
\]
nodes,
where $N_{\delta}$ is the number of $\delta$-optimal solutions.
\end{lemma}

\begin{proof}[Proof of Lemma~\ref{Lemma_UpperBound_WorstCase}]
First, observe that any node expanded by $\Astar$ must be opened by $\Astar$ before it is expanded. 
Consider a node $v$ that is opened by $\Astar$ using the heuristic $h$. Then
\[
f(v) \geq g(v)+ (1-\epsilon_1) h^*(v)\,.
\]
Recall that $v$ is potentially expanded by $\Astar$ if $f(v)\leq M$. Since $M\leq (1+\epsilon_2)k$, if the node $v$ is expanded, it must have 
\[
g(v)+ (1-\epsilon_1) h^*(v) \leq (1+\epsilon_2)k\,.
\]
Note that the parent of any opened node must be expanded. Let $u$ be the parent of $v$, and let $c(u,v)$ denote the cost of the edge $(u,v)$. We have
\[
g(u) = g(v) - c(u,v)\,,~\text{and}
\]
\[
h^*(u)\leq h^*(v) + c(u,v)\,.
\]

Now let $\delta=\epsilon_1+\epsilon_2$  and suppose $v$ does not lie {on} any path from the root to a $\delta$-optimal solution. Then $h^*(v)\geq (1+\delta)k - g(v)$. It follows that
\[
f(v) \geq g(v) +(1-\epsilon_1)[(1+\delta)k-g(v)]
     =(1-\epsilon_1)(1+\delta) k +\epsilon_1 g(v) \,.
\]
Recall that $v$ is potentially expanded by $\Astar$ if $f(v)\leq M$. Since $M\leq (1+\epsilon_2)k$, the node $v$ will not be potentially expanded if
\begin{equation}\label{Eq_Lemma_UpperBound_WorstCase}
(1-\epsilon_1)(1+\delta) k +\epsilon_1 g(v) > (1+\epsilon_2)k\,.
\end{equation}
Since $\epsilon_1>0$, the inequality \eqref{Eq_Lemma_UpperBound_WorstCase} is equivalent to
\[
g(v) > (\epsilon_2/\epsilon_1-\delta/\epsilon_1+1+\delta )k 
= \delta k \,.
\]
Hence, any opened node that is potentially expanded must belong to one of the following groups:
\begin{description}
\item[Group 1] consists of nodes with cost at most $(\epsilon_1+\epsilon_2) k$.
\item[Group 2] consists of nodes with cost in the range
\(
\bigl( (\epsilon_1+\epsilon_2) k, ~ (1+\epsilon_2)k \bigr]
\) that lie on the path from the root to some $\delta$-optimal solution.
\end{description}

On the other hand, on each $\delta$-optimal solution path, there are at most $(1-\epsilon_1)k/m$ nodes with cost in $\bigl( (\epsilon_1+\epsilon_2)k,~(1+\epsilon_2)k \bigr]$, where $m$ is the minimal edge cost. Pessimistically assuming that \emph{all} nodes with cost no more than $(\epsilon_1+\epsilon_2)k$ are potentially expanded in addition to those on paths to $\delta$-optimal solutions yields the statement of the lemma. (Note that as $d \geq 2$, $\sum_{i = 0}^{\ell} d^i \leq 2 d^\ell$ and that every potentially expanded node $v$ must have depth $g(v)\leq f(v)\leq M \leq (1+\epsilon_2)k$.)
\end{proof}
}
 
\section{Experiments with Knapsack Problem}
We will experimentally investigate the relationship between inconsistency and accuracy of heuristics on a practical domain namely the Knapsack problem. This problem is  NP-complete and has applications in many fields, from business to cryptography. Our heuristics will also be built in a practical way, based on an approximation algorithm for the Knapsack problem.
 
\subsection{Search Model for Knapsack}

A Knapsack instance is denoted by a tuple $\tup{X, p, w, C}$, where $X$ is a finite set of items, $p: X\to \ints^+$ is a function assigning profit to each item, $w: X\to \ints^+$ is a function assigning weight to each item, and $C>0$ is the capacity of the knapsack. Recall that the knapsack problem is to find a subset $X^* \subseteq X$ of items whose total weight does not exceed capacity $C$ and whose total profit is maximal. We will write $p(X)$ and $w(X)$ to denote the total profit and the total weight, respectively, of all items in $X$, i.e.,  $w(X)=\sum_{i\in X}w(i)$ and $p(X)=\sum_{i\in X}p (i)$. For each positive integer $n$, let $[n]=\set{1,2,\ldots, n}$, and  we may simply write $[n]$ to represent a set of $n$ items. 

Here we will adopt the search model for the Knapsack problem that has been employed in \cite{Ref_Dinh12Complexity}.
In particular, consider the Knapsack instance $\tup{[n],p,w,C}$. The search graph for this instance is a directed graph, in which each node (or state) is a nonempty subset $X\subseteq[n]$ and each edge $(X, X')$ corresponds to the removal of an item $i\in X$ so that $X\setminus \set{i}=X'$. The cost of such an edge $(X,X')$ is the profit of the removed item $i$. See Figure~\ref{Fig:Knapsack-search-space-4} for an example of edges from a node $X=\set{1,2,3,4}$. The start node is the set $[n]$. A node $X$ is designated as a solution if $w(X)\leq C$.

An important property of this search space is that every path from node $X$ to node $X'$ has the same total cost, which equals the total profit of items in $X\setminus X'$. Thanks to this property, $\Astar$ will avoid reopening nodes from CLOSED. Thus, consistent heuristics are not needed in this case.

\begin{figure}[ht]
\centering
    \begin{tikzpicture}[scale=1,state/.style={ellipse,draw=black!70,thick,
        inner sep=0pt,minimum height=4mm,minimum width=2mm}]
      \node[state] (1234) [fill=black!20] at (0,0) {$\{1,2,3,4\}$};
      \node[state] (123) at (-3,-1.75) {$\{1,2,3\}$};
      \node[state] (124) at (-1,-1.75) {$\{1,2,4\}$};
      \node[state] (134) at (1,-1.75) {$\{1,3,4\}$};
      \node[state] (234) at (3.5,-1.75) {$\{2,3,4\}$};
      \path[->,thick] 
      (1234) edge node [above]{$p(4)$} (123)      
      (1234) edge node [right]{$p(3)$} (124)      
      (1234) edge node [right]{$p(2)$} (134)      
      (1234) edge node [above]{$p(1)$} (234);
    \end{tikzpicture}  
\caption{
Edges from a node in a Knapsack search space.
 }
\label{Fig:Knapsack-search-space-4}
\end{figure}
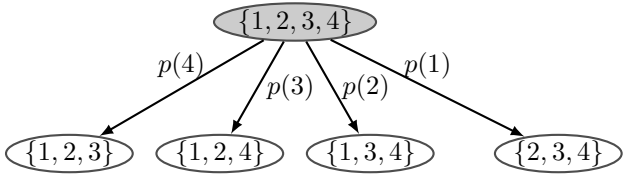

\subsection{Heuristic Construction}
Consider the search space for a Knapsack instance $\tup{[n], p,w, C}$.
We construct efficient admissible heuristics on this search space in a similar way to the construction of \citeauthor{Ref_Dinh12Complexity}~\shortcite{Ref_Dinh12Complexity}, but without constraints to obtain an accuracy guarantee, 
which is a lower bound on the minimal accurate rate.
The main ingredient of this construction is an FPTAS (\emph{Fully Polynomial Time Approximation Scheme}) due to \citeauthor{Ref_Ibarra75Fast}~\shortcite{Ref_Ibarra75Fast}, which is described in Algorithm \ref{Alg:FPTAS-Knapsack} below. This FPTAS is an algorithm, denoted $\mathcal{A}$, that returns a solution with total profit at least $(1-\epsilon)\Opt(X)$ to each Knapsack instance $\tup{X,p,w,c}$ and runs in time 
$O\left(|X|^3/\epsilon\right)$ \cite[p. 70]{Ref_Vazirani01approximation}, for any given $\epsilon\in(0,1)$, where  $\Opt(X)$ is the total profit of an optimal solution to the Knapsack instance $\tup{X,p,w,c}$.
For each subset $X\subseteq [n]$,  
let $\mathcal{A}_{\epsilon}(X)$ denote the total profit of the solution returned by algorithm $\mathcal{A}$ with error parameter $\epsilon$  to the Knapsack instance  $\tup{X, p,w, c}$.
Then for any $\epsilon\in (0,1)$,
\[
(1-\epsilon)\Opt(X)\leq \mathcal{A}_{\epsilon}(X) \leq \Opt(X)\,.
\]
Since $h^*(X)=p(X)-\Opt(X)$, it follows that
\begin{equation}\label{Eq:Knapsack-appr}
p(X) - \frac{\mathcal{A}_{\epsilon}(X)}{1-\epsilon} \leq h^*(X) \leq p(X) -\mathcal{A}_{\epsilon}(X)\,.
\end{equation}
Note that the lower bound $p(X) - \frac{\mathcal{A}_{\epsilon}(X)}{1-\epsilon} $ can fall below zero, especially for large $\epsilon$. 
Hence, for each parameter $\epsilon\in (0,1)$, we define the following heuristic $h_{\epsilon}$ whose admissibility is guaranteed: for any non-solution node $X$,
\[
h_{\epsilon}(X) \eqdef \max\set{p(X) - \frac{\mathcal{A}_{\epsilon}(X)}{1-\epsilon}, 0}\,.
\] 
 
Since the running time to compute $\mathcal{A}_{\epsilon}(X)$ is $O\left(|X|^3 \epsilon^{-1}\right)$, the running time to compute $h_{\epsilon}(X)$ is also $O\left(|X|^3 \epsilon^{-1}\right)$,  which is polynomial in both $n$ and $\epsilon^{-1} $. 

\begin{algorithm}\caption{FPTAS for Knapsack \cite[p. 70]{Ref_Vazirani01approximation}}\label{Alg:FPTAS-Knapsack}
\textbf{Given}: Knapsack instance $\tup{X,p,w,C}$, and error parameter $\epsilon\in (0,1)$. Let $X=\set{a_1,\ldots, a_j}$.
\begin{enumerate}
\item Let $P=\max_{i\in X} p(i)$ and $K=\epsilon P/|X|$. 
\item For each item $i\in X$, define new profit $p'(i)=\floor{p(i)/K}$. Let $P'=\floor{P/K}$.
\item Let $S_{i,q}$ denote a subset of $\set{a_1,\ldots,a_i}$ so that $p'(S_{i,q})=q$ and $w(S_{i,q})$ is minimal, and let $w_{i,q}:=w(S_{i,q})$ (if no such a set exists, let $w_{i,q}=\infty$). 
Use dynamic programming to compute $w_{i,q}$ and $S_{i,q}$ for all $i\in\set{1,\ldots,|X|}$ and $q\in\set{0,1,\ldots,|X|P'}$. 
\remove{
as follows:
	\begin{enumerate}
	\item Initialize $w_{i,q}=\infty$ for all $i, q$.
	\item For each $q=0,1\ldots,|X|P'$: If $p'(a_1)=q$, then set $S_{1,q}=\set{a_1}$ and $w_{1,q}=w(a_1)$.
	\item For  $i=2,\ldots,|X|$, for $q=0,1,\ldots, |X|P'$:\\ If $p'(a_i)<= q$ AND $w_{i-1,q}> w(a_i)+w_{i-1, q-p'(a_i)}$, then set $S_{i,q}=S_{i-1, q-p'(a_i)} \cup\set{a_i}$ and $w_{i,q}=w(a_i)+w_{i-1, q-p'(a_i)}$. Otherwise, set $S_{i,q}=S_{i-1,q}$ and $w_{i,q}=w_{i-1,q}$.
	\end{enumerate}
	}
\item Find the most profitable set $S'$ among $S_{i,q}$ with $w_{i,q}\leq C$.
\item Return $S'$.
\end{enumerate}
\end{algorithm}

While there is no accuracy guarantee on $h_{\epsilon}$, it is intuitive to expect the growth in the accuracy of $h_{\epsilon}$ by reducing the FPTAS error parameter $\epsilon$. It then remains to find if the inconsistency of $h_{\epsilon}$ will also grow as $\epsilon$ decreases.

\subsection{Experiments}
For our experiments, we generate hard Knapsack instances $\tup{[n],p,w,C}$ from the following Knapsack instance distributions, or ``types,'' which are identified by \citeauthor{Ref_Pisinger05Where}~\shortcite{Ref_Pisinger05Where} as difficult instances for best-known exact algorithms: 
\begin{description}
\item \textbf{Strongly correlated:}
For each item $i\in [n]$, choose its weight $w(i)$ as a random integer in the range $[1,R]$ and set its profit $p(i)=w(i) +R/10$. This correlation between  weights and profits reflects  real-life situations where the profit of an item is proportional to its weight plus some fixed charge.
\item\textbf{Inverse strongly correlated:}
For each item $i\in [n]$, choose its profit $p(i)$ as a random integer in the range $[1,R]$ and set its weight $w(i)=p(i) +R/10$.  
\item\textbf{Almost strongly correlated:}
For each item $i\in [n]$, choose its weight $w(i)$ as a random integer in the range $[1,R]$ and choose its profit $p(i)$ as a random integer in the range $[w(i) +R/10-R/500, w(i) +R/10+R/500]$.  
\item\textbf{Subset sum:}
For each item $i\in [n]$, choose its weight $w(i)$ as a random integer in the range $[1,R]$ and set its profit $p(i)=w(i)$. Knapsack instances of this type are instances of the subset sum problem.

\item\textbf{Uncorrelated with similar weight:}
For each item $i\in [n]$, choose its weight $w(i)$ as a random integer in the range $[100000,100100]$ and choose its profit $p(i)$ as a random integer in $[1,R]$.  

\item\textbf{Multiple strongly correlated:}
For each item $i\in [n]$, choose its weight $w(i)$ as a random integer in the range $[1,R]$. If $w(i)$ is divisible by $6$, set the profit $p(i)=w(i) +3R/10$. Otherwise, set $p(i)=w(i) +2R/10$. This family of instances is denoted $\mathsf{mstr}(3R/10, 2R/10, 6)$ by \citeauthor{Ref_Pisinger05Where}~\shortcite{Ref_Pisinger05Where} and is the most difficult family of ``multiple strongly correlated instances'' considered by \citeauthor{Ref_Pisinger05Where}. 

\item\textbf{Profit ceiling:} For each item $i\in [n]$, choose its weight $w(i)$ as a random integer in the range $[1,R]$ and set its profit $p(i)=3\ceil{w(i)/3}$. This family of instances is denoted $\mathsf{pceil}(3)$, which resulted in sufficiently difficult instances for experiments of \citeauthor{Ref_Pisinger05Where}~\shortcite{Ref_Pisinger05Where}. 

\end{description}
Here we set {the data range parameter} $R := 1000$. The knapsack capacity is chosen as  $C=({t}/101)w([n])$, where ${t}$ is a random integer in the range $[30,70]$. 

In our experiments, we generate one Knapsack instance $\tup{[n],p,w,C}$ of each type above.  
For each Knapsack instance  generated, we run a series of $\Astar(h_{_{\epsilon}})$  with different values of $\epsilon$, as well as breath-first search. We chose the sample points for $\epsilon$ with two consecutive points differed by a factor of $2$, so as to clearly see the change in the number of node expansions made by $\Astar(h_{_{\epsilon}})$.  

The main challenge of these experiments is to compute $\ARN(h_{\epsilon})$. 
It is typically too expensive to compute $\ARN$ of a heuristic on a practical search space, because it requires computing  $h^*(x)$ exactly for all non-solution nodes $x$. For the Knapsack search space, we can also rely on the given FPTAS $\mathcal{A}$ to compute $h^*(X)$ for each node $X\subseteq[n]$. 
Our computation is based on the following proposition:

\begin{proposition}
For any $0<\gamma<1/\Opt(X)$,
\begin{equation}\label{Eq:h*(X)}
h^*(X) = \floor{p(X) -\mathcal{A}_{\gamma}(X)}\,.
\end{equation}
\end{proposition}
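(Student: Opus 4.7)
The plan is to sandwich the quantity $p(X) - \mathcal{A}_{\gamma}(X)$ strictly between $h^*(X)$ and $h^*(X)+1$, and then invoke integrality of $h^*(X)$ so that the floor collapses to equality. Everything I need is already stated: the FPTAS guarantee
\[
(1-\gamma)\Opt(X) \leq \mathcal{A}_{\gamma}(X) \leq \Opt(X)
\]
together with the identity $h^*(X) = p(X) - \Opt(X)$ observed just before equation~\eqref{Eq:Knapsack-appr}.

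First I would establish the lower bound. From $\mathcal{A}_{\gamma}(X) \leq \Opt(X)$,
\[
p(X) - \mathcal{A}_{\gamma}(X) \geq p(X) - \Opt(X) = h^*(X).
\]
Next I would establish a strict upper bound using the hypothesis $\gamma < 1/\Opt(X)$: from $\mathcal{A}_{\gamma}(X) \geq (1-\gamma)\Opt(X)$,
\[
p(X) - \mathcal{A}_{\gamma}(X) \leq p(X) - (1-\gamma)\Opt(X) = h^*(X) + \gamma\,\Opt(X) < h^*(X) + 1.
\]
Combining the two bounds yields $h^*(X) \leq p(X) - \mathcal{A}_{\gamma}(X) < h^*(X) + 1$.

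Finally, I would observe that because the profit function $p$ takes values in $\ints^+$, both $\Opt(X)$ and $p(X)$ are integers, so $h^*(X) = p(X) - \Opt(X)$ is a non-negative integer. Applying $\lfloor\cdot\rfloor$ to the sandwich then gives $\floor{p(X) - \mathcal{A}_{\gamma}(X)} = h^*(X)$, as claimed. The only delicate point is the strictness in the upper bound, which is precisely what the hypothesis $\gamma < 1/\Opt(X)$ buys us; there is no real obstacle beyond that bookkeeping.
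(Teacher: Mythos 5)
Your proof is correct and follows essentially the same route as the paper's: sandwich $p(X)-\mathcal{A}_{\gamma}(X)$ in $[h^*(X),\,h^*(X)+1)$ using the FPTAS guarantee and the hypothesis $\gamma<1/\Opt(X)$, then invoke integrality of $h^*(X)$. The only cosmetic difference is that you derive the lower bound directly from $\mathcal{A}_{\gamma}(X)\leq\Opt(X)$ whereas the paper cites its earlier displayed inequality, which amounts to the same thing.
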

\begin{proof}
Since $\mathcal{A}_{\gamma}(X) \geq (1-\gamma)\Opt(X)$, we have
\[
\begin{split}
p(X) -\mathcal{A}_{\gamma}(X) &\leq p(X)-(1-\gamma)\Opt(X)\\
& = h^*(X) +\gamma \Opt(X) < h^*(X)+1\,.
\end{split}
\]
On the other hand, from Equation \eqref{Eq:Knapsack-appr}, we have $h^*(X) \leq p(X) -\mathcal{A}_{\gamma}(X)$. The proof is completed by noting that  $h^*(X)$ is an integer\,.
\end{proof}

Since $\Opt(X)< \min\set{p(X), \Opt([n])+1}$ for all non-solution node $X\subseteq [n]$,
we compute $h^*(X)$ as in Equation \eqref{Eq:h*(X)} with
\begin{equation}\label{Eq:gamma}
\gamma={1}/{\min\set{p(X), \Opt([n])+1}}\,.
\end{equation}
The value of $\Opt([n])$ is obtained after running $\Astar(h_{\epsilon})$, which returns the optimal solution cost $h^*([n])=p([n])-~\Opt([n])$. 

While using the FPTAS could save us a considerable amount of time, computing $\mathcal{A}_{\gamma}(X)$ with $\gamma$ specified in \eqref{Eq:gamma} is still time-comsuming -- it actually has pseudo-polynomial time complexity. 
As such, we limit our experiments to Knapsack instances of relatively small size ($n=20$), for which each $\ARN(h_{\epsilon})$ can be computed within 10 hours.

Detailed results of our experiments are shown in Tables \ref{Table:Strc}--
\ref{Table:ProfitCeil}, each table corresponds to a Knapsack instance type listed above. In each of these tables, the first column gives the values of the FPTAS error parameter $\epsilon$. The row  with ``BFS'' in the first column presents the breath-first search. The column ``Node Exps'' contains the number of node expansions made by each search. The last four columns show data for $\ARS(h_{\epsilon})$, $\ARN(h_{\epsilon})$, $\INR(h_{\epsilon})$, and $\WI(h_{\epsilon})$, respectively. Data of the Multiple Strongly Correlated type (in Table \ref{Table:MulStrc}) are not available for all  sample values of $\epsilon$ due to lack of time. 
Figure \ref{Fig:KnapsackAverage} shows the trend of $\ARS(h_{\epsilon})$, $\ARN(h_{\epsilon})$, $\INR(h_{\epsilon})$ and $\WI(h_{\epsilon})$,  averaged over all Knapsack instances, but the one of the Multiple Strongly Correlated type, in our experiments. Recall that all these Knapsack instances  have the same number of items, $n=20$, thus have the same search graph. 

Our data show that when the accuracy metrics $\ARN(h_{\epsilon})$ and $\ARS(h_{\epsilon})$ grow, then so do the inconsistency metrics $\INR(h_{\epsilon})$ and $\WI(h_{\epsilon})$. 
Loosely speaking, the accuracy and the inconsistency level of heuristics $h_{\epsilon}$ are somewhat correlated. This could explain why the inconsistent admissible heuristics can improve the efficiency of $\Astar$.
We observe, in addition, that for small $\epsilon$ ($<0.02$), the values of $\ARN$ and $\INR$ are close to each other in many instances, such as Strongly Correlated, Almost Strongly Correlated, Subset Sum, and Multiple Strongly Correlated. 
Regarding the performance of $\Astar$, our data also show a significant reduction in the number of node expansions when the heuristic is more accurate, and thus more inconsistent.

\begin{figure}[!t]
\centering
 \begin{tikzpicture}
   \begin{axis}[
     ylabel={},xlabel={FPTAS error parameter $\epsilon$}, 
     legend style={legend pos=north east, legend cell align=left}]
     \addplot[color=blue,mark=x,domain=0.0:0.5,thick,dashed]
                    coordinates {
(0.0016,0.998208)
(0.0032,0.996429333)
(0.0064,0.9928155)
(0.0128,0.985491333)
(0.0256,0.970511833)
(0.0512,0.939629167)
(0.1024,0.871867167)
(0.2048,0.7110395)
(0.4096,0.314347667)
	};
     \addlegendentry{ARS}

     \addplot[color=blue,mark=*,domain=0.0:0.5,thick]
                    coordinates {
(0.0016,0.976548333)
(0.0032,0.957718)
(0.0064,0.919673333)
(0.0128,0.856378167)
(0.0256,0.75934)
(0.0512,0.611018333)
(0.1024,0.407965667)
(0.2048,0.183286667)
(0.4096,0.021184167)
	};
     \addlegendentry{ARN}

     \addplot[color=red,mark=square*,domain=0.0:.5,thick]
                    coordinates {
(0.0016,0.904193167)
(0.0032,0.902916833)
(0.0064,0.897835167)
(0.0128,0.883174)
(0.0256,0.848439)
(0.0512,0.778592)
(0.1024,0.627149833)
(0.2048,0.376294167)
(0.4096,0.074935833)
	};
     \addlegendentry{IRN}

     \addplot[color=red,mark=diamond*,domain=0.0:0.5,thick]
                    coordinates {
(0.0016,3.1050965)
(0.0032,3.084059333)
(0.0064,3.0422405)
(0.0128,2.960107)
(0.0256,2.802683167)
(0.0512,2.498024333)
(0.1024,1.947922167)
(0.2048,1.089671833)
(0.4096,0.186747)	};
     \addlegendentry{WIRE}
     \end{axis}
 \end{tikzpicture}
\caption{Results averaged over all Knapsack instances (of the same size $n=20$) from Tables \ref{Table:Strc}, \ref{Table:InStrc}, \ref{Table:AlStrc}, \ref{Table:SubsetSum}, \ref{Table:UnSimWeight}, 
and \ref{Table:ProfitCeil}}\label{Fig:KnapsackAverage}
\vspace{-3cm}
\end{figure}
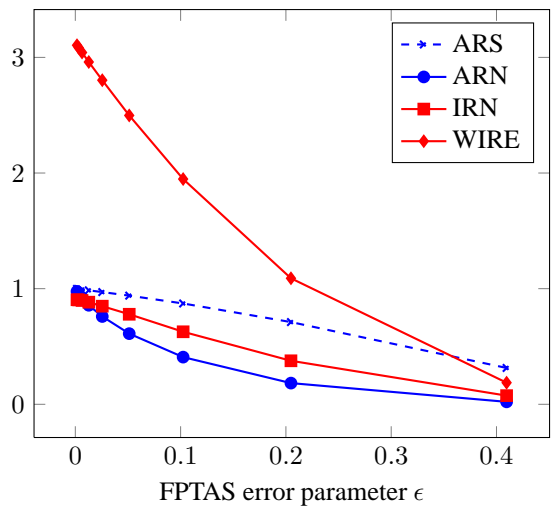

\remove{
\begin{figure}[!t]
\centering
 \begin{tikzpicture}
   \begin{axis}[
     ylabel={},xlabel={FPTAS error parameter $\epsilon$}, 
     legend style={legend pos=north east, legend cell align=left}]
     \addplot[color=blue,mark=*,domain=0.0:0.5,thick]
                    coordinates {
(0.0016,0.992237)
(0.0032,0.98437475)
(0.0064,0.96914975)
(0.0128,0.9390555)
(0.0256,0.882464)
(0.0512,0.779457)
	};
     \addlegendentry{ARN}

     \addplot[color=red,mark=square*,domain=0.0:.5,thick]
                    coordinates {
(0.0016,0.95834125)
(0.0032,0.95832575)
(0.0064,0.956857)
(0.0128,0.954186)
(0.0256,0.94150225)
(0.0512,0.91008275)	};
     \addlegendentry{IRN}

     \end{axis}
 \end{tikzpicture}
\caption{Results averaged over Knapsack instances from Tables \ref{Table:Strc}, \ref{Table:AlStrc}, \ref{Table:SubsetSum}, and \ref{Table:MulStrc}.}
\label{Fig:KnapsackAverage1}
\end{figure}
}

\remove{
\begin{figure}[!t]
\centering
 \begin{tikzpicture}
   \begin{axis}[
     ylabel={},xlabel={FPTAS error parameter $\epsilon$}, 
     legend style={legend pos=north east, legend cell align=left}]
     \addplot[color=blue,mark=*,domain=0.0:0.5,thick]
                    coordinates {
(0.0016,0.9908)
(0.0032,0.9816)
(0.0064,0.9637)
(0.0128,0.9285)
(0.0256,0.8632)
(0.0512,0.7469)
(0.1024,0.5569)
(0.2048,0.2887)
(0.4096,0.0412)	
	};
     \addlegendentry{ARN}

     \addplot[color=red,mark=square*,domain=0.0:.5,thick]
                    coordinates {
(0.0016,0.9514)
(0.0032,0.9514)
(0.0064,0.9496)
(0.0128,0.9463)
(0.0256,0.9303)
(0.0512,0.8927)
(0.1024,0.7947)
(0.2048,0.5561)
(0.4096,0.1418)	
	};
     \addlegendentry{IRN}

     \end{axis}
 \end{tikzpicture}
\caption{Results averaged over Knapsack instances from Table \ref{Table:Strc}, \ref{Table:AlStrc}, and \ref{Table:SubsetSum}.}
\label{Fig:KnapsackAverage2}
\end{figure}
}

\begin{table}[!h]
  \centering 
\scriptsize{
  \begin{tabular}{|l|r|l|l|l|r|}
\hline
$\epsilon$ &  Node Exps &  $\ARS$ &  $\ARN$ &  $\INR$ &  $\WI$\\
\hline
0.0016 &  332  & 0.9986 & 0.9918 & 0.9362 & 2.9378\\ 
0.0032 &  532  & 0.9973 & 0.9834 & 0.9363 & 2.9238\\ 
0.0064 &  627  & 0.9945 & 0.9667 & 0.9355 & 2.8956\\ 
0.0128 &  767  & 0.9889 & 0.9328 & 0.9328 & 2.8381\\ 
0.0256 &  9,921  & 0.9773 & 0.8644 & 0.9211 & 2.7249\\ 
0.0512 &  73,212  & 0.9538 & 0.7292 & 0.8709 & 2.4891\\ 
0.1024 &  446,925  & 0.9023 & 0.5104 & 0.7501 & 2.0087\\ 
0.2048 &  609,517  & 0.7782 & 0.2158 & 0.4711 & 1.1008\\ 
0.4096 &  609,535  & 0.4159 & 0.0054 & 0.0294 & 0.0519\\
\hline 
BFS &  609,501  &  &  &  & \\
\hline
\end{tabular}
}\vspace{-0.2cm}
  \caption{Results of Knapsack instance type \textbf{Strongly Correlated}}\label{Table:Strc}
\end{table}

\begin{table}[!h]
  \centering 
\scriptsize{
  \begin{tabular}{|l|r|l|l|l|r|}
\hline
$\epsilon$ &  Node Exps &  $\ARS$ &  $\ARN$ &  $\INR$ &  $\WI$\\
\hline
0.0016 &  4,095  & 0.9975 & 0.9651 & 0.7857 & 1.6031\\ 
0.0032 &  14,831  & 0.9950 & 0.9303 & 0.7759 & 1.5802\\ 
0.0064 &  19,576  & 0.9898 & 0.8586 & 0.7767 & 1.5349\\ 
0.0128 &  33,852  & 0.9794 & 0.7427 & 0.7552 & 1.4490\\ 
0.0256 &  99,985  & 0.9587 & 0.5826 & 0.6893 & 1.2883\\ 
0.0512 &  179,001  & 0.9149 & 0.3664 & 0.5455 & 0.9967\\ 
0.1024 &  190,346  & 0.8176 & 0.1467 & 0.3201 & 0.5434\\ 
0.2048 &  190,357  & 0.5886 & 0.0140 & 0.0573 & 0.0859\\ 
0.4096 &  190,352  & 0 & 0 & 0 & 0\\
\hline 
BFS &  190,352  &  &  &  & \\
\hline
\end{tabular}
}\vspace{-0.2cm}
  \caption{Results of Knapsack instance type: \textbf{Inverse Strongly Correlated}}\label{Table:InStrc}
\end{table}

\begin{table}[!h]
  \centering 
\scriptsize{
  \begin{tabular}{|l|r|l|l|l|r|}
\hline
$\epsilon$ &  Node Exps &  $\ARS$ &  $\ARN$ &  $\INR$ &  $\WI$\\
\hline
0.0016 &  511  & 0.9992 & 0.9963 & 0.9782 & 4.7133\\ 
0.0032 &  766  & 0.9983 & 0.9925 & 0.9783 & 4.7035\\ 
0.0064 &  766  & 0.9967 & 0.9850 & 0.9781 & 4.6799\\ 
0.0128 &  766  & 0.9933 & 0.9697 & 0.9769 & 4.6360\\ 
0.0256 &  19,184  & 0.9864 & 0.9386 & 0.9731 & 4.5496\\ 
0.0512 &  116,003  & 0.9730 & 0.8741 & 0.9661 & 4.3672\\ 
0.1024 &  627,996  & 0.9427 & 0.7524 & 0.9373 & 3.9754\\ 
0.2048 &  921,291  & 0.8713 & 0.5116 & 0.8185 & 3.0808\\ 
0.4096 &  921,755  & 0.6518 & 0.1166 & 0.3824 & 0.9986\\
\hline 
BFS &  921,751  &  &  &  & \\
\hline
\end{tabular}
}\vspace{-0.2cm}
  \caption{Results of Knapsack instance type: \textbf{Almost Strongly Correlated}}\label{Table:AlStrc}
\end{table}

\begin{table}[!h]
  \centering 
\scriptsize{
  \begin{tabular}{|l|r|l|l|l|r|}
\hline
$\epsilon$ &  Node Exps &  $\ARS$ &  $\ARN$ &  $\INR$ &  $\WI$\\
\hline
0.0016 &  363,190  & 0.9985 & 0.9845 & 0.9399 & 3.7904\\ 
0.0032 &  427,384  & 0.9971 & 0.9689 & 0.9398 & 3.7649\\ 
0.0064 &  484,978  & 0.9940 & 0.9395 & 0.9352 & 3.7177\\ 
0.0128 &  528,999  & 0.9881 & 0.8832 & 0.9291 & 3.6237\\ 
0.0256 &  552,068  & 0.9757 & 0.7867 & 0.8968 & 3.4356\\ 
0.0512 &  562,801  & 0.9501 & 0.6374 & 0.8412 & 3.0607\\ 
0.1024 &  568,781  & 0.8956 & 0.4078 & 0.6967 & 2.3275\\ 
0.2048 &  569,621  & 0.7620 & 0.1387 & 0.3787 & 1.0798\\ 
0.4096 &  569,621  & 0.3591 & 0.0015 & 0.0138 & 0.0232\\ 
\hline
BFS &  569,621  &  &  &  & \\
\hline
\end{tabular}
}\vspace{-0.2cm}
  \caption{Results of Knapsack instance type: \textbf{Subset Sum}}\label{Table:SubsetSum}
\end{table}

\begin{table}[!h]
  \centering 
\scriptsize{
  \begin{tabular}{|l|r|l|l|l|r|}
\hline
$\epsilon$ &  Node Exps &  $\ARS$ &  $\ARN$ &  $\INR$ &  $\WI$\\
\hline
0.0016 &  24,573  & 0.9987 & 0.9625 & 0.8817 & 3.1518\\ 
0.0032 &  40,951  & 0.9975 & 0.9524 & 0.8895 & 3.1368\\ 
0.0064 &  53,235  & 0.9950 & 0.9321 & 0.8821 & 3.1064\\ 
0.0128 &  65,517  & 0.9899 & 0.8913 & 0.8710 & 3.0449\\ 
0.0256 &  135,083  & 0.9793 & 0.8319 & 0.8598 & 2.9282\\ 
0.0512 &  236,660  & 0.9579 & 0.7233 & 0.8633 & 2.6829\\ 
0.1024 &  448,451  & 0.9116 & 0.5142 & 0.7417 & 2.1757\\ 
0.2048 &  681,705  & 0.7984 & 0.2125 & 0.4915 & 1.1238\\ 
0.4096 &  698,995  & 0.4592 & 0.0035 & 0.0241 & 0.0468\\ 
\hline
BFS &  699,037  &  &  &  & \\
\hline
\end{tabular}
}\vspace{-0.2cm}
  \caption{Results of Knapsack instance type: \textbf{Uncorrelated with Similar Weight}}\label{Table:UnSimWeight}
\end{table}

\begin{table}[!h]
  \centering 
\scriptsize{
  \begin{tabular}{|l|r|l|l|l|r|}
\hline
$\epsilon$ &  Node Exps &  $\ARS$ &  $\ARN$ &  $\INR$ &  $\WI$\\
\hline
0.0016 &  511  & 0.9990 & 0.9964 & 0.9790 & 4.0425\\ 
0.0032 &  511  & 0.9981 & 0.9927 & 0.9790 & 4.0317\\ 
0.0064 &  511  & 0.9961 & 0.9854 & 0.9787 & 4.0120\\ 
0.0128 &  766  & 0.9920 & 0.9706 & 0.9779 & 3.9737\\ 
0.0256 &  1,338  & 0.9837 & 0.9402 & 0.9750 & 3.8915\\ 
0.0512 &  14,429  & 0.9670 & 0.8770 & 0.9621 & 3.7233\\ 
0.1024 &  85,837  & 0.9298 & 0.7458 & 0.9187 & 3.3614\\
\hline
\end{tabular}
}\vspace{-0.2cm}
  \caption{Results of Knapsack instance type: \textbf{Multiple Strongly Correlated}}\label{Table:MulStrc}
\end{table}

\begin{table}[!h]
  \centering 
\scriptsize{
  \begin{tabular}{|l|r|l|l|l|r|}
\hline
$\epsilon$ &  Node Exps &  $\ARS$ &  $\ARN$ &  $\INR$ &  $\WI$\\
\hline
0.0016 &  43,262  & 0.9967 & 0.9592 & 0.9034 & 2.4341\\ 
0.0032 &  69,291  & 0.9935 & 0.9188 & 0.8978 & 2.3952\\ 
0.0064 &  86,397  & 0.9869 & 0.8361 & 0.8795 & 2.3188\\ 
0.0128 &  94,178  & 0.9733 & 0.7187 & 0.8340 & 2.1689\\ 
0.0256 &  97,350  & 0.9457 & 0.5518 & 0.7504 & 1.8894\\ 
0.0512 &  99,497  & 0.8882 & 0.3356 & 0.5846 & 1.3916\\ 
0.1024 &  100,366  & 0.7615 & 0.1163 & 0.3171 & 0.6568\\ 
0.2048 &  100,351  & 0.4677 & 0.0072 & 0.0406 & 0.0669\\ 
0.4096 &  100,364  & 0 & 0 & 0 & 0\\ 
\hline
BFS &  100,364  &  &  &  & \\
\hline
\end{tabular}
}\vspace{-0.2cm}
  \caption{Results of Knapsack instance type: \textbf{Profit Ceiling}}\label{Table:ProfitCeil}
\end{table}

\section{Conclusions and Future Work}
This work provides evidence that the inconsistency and accuracy of heuristics are related. Theoretical evidence suggests that the heuristic accuracy could be upper-bounded by its level of inconsistency. Thus, requiring the heuristic to be consistent could limit the room to improve its accuracy.  Empirical evidence with a family of practical admissible heuristics on Knapsack domains  shows that the more accurate the heuristic, the more inconsistent it is.  The experiments in this work also provide positive results about accurate heuristics and inconsistent admissible heuristics, that is, both the accuracy and the inconsistency of the heuristic can be used to improve the performance of $\Astar$.

Still, further investigation on both the inconsistency and accuracy of heuristics should be carried out. In particular, we have the following goals in mind for our future work:
\begin{enumerate}
\item Investigate the relationship between $\ARN(h)$ and $\INR(h)$ in general cases. 
\item Establish good bounds on the number of node expansions in terms of both accuracy  and inconsistency metrics of the heuristic used. 
\end{enumerate}

\newpage
\bibliographystyle{aaai} 
\bibliography{Astar}

\end{document}